\newtheorem{theorem}{Theorem}
\begin{document}
% The file aaai.sty is the style file for AAAI Press 
% proceedings, working notes, and technical reports.
%
\title{Efficient and Scalable Multi-task Regression on Massive Number of Tasks}
\author{Xiao He \\
NEC Labs Europe\\
Heidelberg, Germany\\
xiao.he@neclab.eu
\And Francesco Alesiani\\
NEC Labs Europe\\
Heidelberg, Germany\\
francesco.alesiani@neclab.eu
\And Ammar Shaker\\
NEC Labs Europe\\
Heidelberg, Germany\\
ammar.shaker@neclab.eu
} 
 
\maketitle
\begin{abstract}
%Applying Multi-task Learning (MTL) methods for problems with milions of tasks is a great challenge. Many real world regression problems can be modeled as MTL with massive tasks, e.g. in retail and transportation domain. However, existing MTL methods cannot be easily used because of lack of efficiceny and scalability.
%Here we propose a novel algorithm, named Multi-task Regression on Massive Tasks (CCMTL), which integrates \emph{convex clustering} into its objective function to explore task relationships on a local graph and efficently solves the underlining convex problem with a new optimization method. 
%CCMTL is accurate, efficient to train, scales linearly in the number of tasks emperically and ready to use for regression problems with milions of tasks.
%On both synthetic and real-world datasets, the proposed CCMTL outperforms several state-of-the-art (SoA) multi-task learning methods in prediction accuracy as well as training efficiency. On a real world retail dataset with $23812$ tasks, CCMTL takes only around $40$ seconds to train, while SoA methods need hours.

Many real-world large-scale regression problems can be formulated as Multi-task Learning (MTL) problems with a massive number of tasks, as in retail and transportation domains. However, existing MTL methods still fail to offer both the generalization performance and the scalability for such problems. Scaling up MTL methods to problems with a tremendous number of tasks is a big challenge. Here, we propose a novel algorithm, named \textbf{Convex Clustering Multi-Task regression Learning (CCMTL)}, which integrates with \emph{convex clustering} on the $k$-nearest neighbor graph of the prediction models. Further, CCMTL efficiently solves the underlying convex problem with a newly proposed optimization method. CCMTL is \textbf{accurate}, \textbf{efficient} to train, and empirically \textbf{scales linearly} in the number of tasks. On both synthetic and real-world datasets, the proposed CCMTL outperforms seven state-of-the-art (SoA) multi-task learning methods in terms of prediction accuracy as well as computational efficiency. On a real-world retail dataset with $23,812$ tasks, CCMTL requires only around $30$ seconds to train on a single thread, while the SoA methods need up to hours or even days.
\end{abstract}

\section{Introduction}
Multi-task learning (MTL) is a branch of machine learning that aims at exploiting the correlation among tasks. To achieve this, the learning of different tasks is performed jointly. It has been shown that learning task relationships can transfer knowledge from information-rich tasks
to information-poor tasks \cite{zhang2014regularization} so that overall generalization error can be reduced. With this characteristic, MTL 
% interesting to apply in practice, with 
has been successfully applied in use cases ranging from Transportation \cite{deng2017situation} to Biomedicine \cite{li2018multi}.

%The improvment comparied to leanring each task indenpedently is significant when each task has a limited number of training instances \cite{argyriou2008convex}. 

%Existing multitask learning methods can be categorized into three groups: 1) joint feature learning that learns a shared feature represenation across all the tasks \cite{argyriou2007multi}, \cite{ji2009accelerated}, \cite{chen2011integrating}; 2) clustering based methods that learns the relatinoship between tasks by grouping similar tasks together \cite{jacob2009clustered}; 3) decomposition based methods that explore the task relationships by learning a low-dimensional feature subspace \cite{kumar2012learning}, \cite{murugesan2017co}. 
%This paradigm differs from transfer learning, which employs what is learned on a source task to support the learning on a target task.

Various multi-task learning algorithms have been proposed in the literature, Zhang and Yang \cite{Zhang2017Survey} wrote a comprehensive survey on state-of-the-art (SoA) methods. For instance, feature learning approach \cite{argyriou2007multi} and low-rank approach \cite{ji2009accelerated,chen2011integrating} assume all the tasks are related, which may not be true in real-world applications. Task clustering approaches \cite{bakker2003task,jacob2009clustered,kumar2012learning} can deal with the situation where different tasks form clusters. However, despite being accurate, these latter methods are computationally expensive for problems with a large number of tasks. 
% Usually they are only applied on problems with hundreds or at most thousands of tasks \cite{ji2009accelerated,murugesan2017co}. 

In real-world regression applications, the number of tasks can be tremendous. For instance, in the retail market, shops' owners would like to forecast the sales amount of all the products based on the historical sales information and external factors. For a typical shop, there are thousands of products, where each product can be modeled as a separate regression task. In addition, if we consider large retail chains, where the number of shops in a given region is in the order of hundreds,
%  LUCA or in the order of hundreds 
the total number of learning tasks easily grows up to hundreds of thousands. A similar scenario can also be found in other applications, e.g. demand prediction in transportation, where each task is one public transport station demand in a city at a certain time per line. Here, at least tens of thousands of tasks are expected. 

% We argue that it would be beneficial to treat the sales forecasting for each product as a regression task and explore their relationship during learning. 

In all these scenarios, MTL approaches that exploit relationships among tasks are appropriate. Unfortunately most of existing SoA multi-task learning methods cannot be applied, because either they are not able to cope with task heterogeneity or are too computationally expensive, scaling super-linearly in the number of tasks. 
% To the best of our knowledge, no existing MTL methods specifically focused on the problem of computational efficiency and scalability in the number of tasks. 
% Usually SoA methods are evaluated on the problem with only hundreds or at most thousands of tasks. 
% In addition, some methods are known to scale quadratically or even cubically \cite{jacob2009clustered,zhou2011clustered} in the number of tasks, which makes them unsuitable for the studied problem in this paper.

To tackle these issues, this paper introduces a novel algorithm, named \textbf{C}onvex \textbf{C}lustering \textbf{M}ulti-\textbf{T}ask regression \textbf{L}earning (CCMTL). It integrates the objective of \emph{convex clustering} \cite{hocking2011clusterpath} into the multi-task learning framework. 
% learns the correlation among tasks with \emph{convex clustering} \cite{hocking2011clusterpath} 
% that efficiently solves the clustering of tasks as a convex optimization problem.
CCMTL is efficient with linear runtime in the number of tasks, yet provides accurate prediction. The detailed contributions of this paper are fourfold:
% Specifically, CCMTL learns the tasks' relationship based on the \emph{convex clustering} of their prediction models, efficiently solves the underlining convex optimization problem and scales linearly to the number of tasks empirically.

\begin{enumerate}[leftmargin=2\parindent]
	\item \textbf{Model:} A new model for multi-task regression that integrates with \emph{convex clustering} on the $k$-nearest neighbor graph of the prediction models;
	\item \textbf{Optimization Method:} A new optimization method for the proposed problem that is proved to converge to the global optimum;
	\item \textbf{Accurate Prediction:} Accurate predictions on both synthetic and real-world datasets in retail and transportation domains which outperform eight SoA multi-task learning methods;
	\item \textbf{Efficient and Scalable Implementation:} An efficient and linearly scalable implementation of the algorithm.
%     which takes only around $30$ seconds on a real-world retail dataset with $23,812$ tasks, where SoA methods typically need up to hours or even days.
On a real-world retail dataset with $23,812$ tasks, the algorithm requires only around $30$ seconds to terminate, whereas SoA methods typically need up to hours or even days.
\end{enumerate}

Over the remainder of this paper, we introduce the proposed algorithm, present the mathematical proofs and a comprehensive experimental setup that benchmarks CCMTL against the SoA on multiple datasets. Lastly, we discuss related works and conclude the paper.

\section{The Proposed Method}
Let us consider a dataset with $T$ regression tasks $\{X_t, Y_t\}=\{(x_{ti}, y_{ti}): i\in\{1,...,N_t\}\}$ for each task $t\in\{1, ..., T\}$. $X_t \in \mathbb{R}^{P\times N_t}$ consists of $N_t$ samples and $P$ features, while $Y_t \in \mathbb{R}^{1\times N_t}$ is the target for task $t$. There are totally $N$ samples, where $N=\sum_{t=1}^TN_t$. We consider linear models in this paper. Let $W = (W_1^T, ..., W_T^T) \in \mathbb{R}^{T\times P}$, where $W_t\in \mathbb{R}^{P\times 1}$ represent the weight vector for task $t$. 

\subsection{Model}
Task clustering MTL methods learns the task relationships by integrating with $k$-means \cite{jacob2009clustered,zhou2011malsar} or matrix decomposition \cite{kumar2012learning}. Unfortunately, these methods are expensive to train, making them impractical for problems with a massive number of tasks.

Recently, \emph{convex clustering} \cite{hocking2011clusterpath} has attracted much attention. It solves clustering of data $X$ as a regularized reconstruction problem:
\begin{equation}
	\min_{U}\frac{1}{2}||X-U||_F^2 + \frac{\lambda}{2}\sum_{i,j \in G}||U_i - U_j||_2
	\label{eq:cc}
\end{equation}
where $U_i$ is the new representation for the $i$th sample, and $G$ is the $k$-nearest neighbor graph on $X$. It is critical to note that the edge terms involve the $\ell_2$ norm, not the squared $\ell_2$ norm, which is essential to achieve clustering. Further, it has been shown that \emph{convex clustering} is efficient with linear scalability \cite{chi2015splitting,shah2017robust,he2018robust}.

Suppose we have a noisy observation of the true prediction models $Z\in \mathbb{R}^{T\times P}$; we would like to learn less noisy models $W$ by \emph{convex clustering}: $Z$ and $W$ would take the role of $X$ and $U$ in Problem (\ref{eq:cc}), respectively. Since $Z$ is not available in practice and the target of MTL is the prediction, we use the prediction error instead of the reconstruction error and form the problem as follows:
% In this paper, we propose to learn tasks' relationship by \emph{convex clustering} on their prediction models, where $W$ takes the role of $U$ in Problem (\ref{eq:cc}). 
% % Suppose we have a noisy observation of the weight matrix, then we would like to learn $W$ by grouping similar tasks. In practice, we do not have $Z$ and 
% Since MTL is supervised, we use the prediction error instead of the reconstruction error and form the multi-task regression problem as the following:
% \vspace{-2mm}
\begin{equation}
	\min_{W}\frac{1}{2}\sum_{t=1}^T||W_t^TX_t-y_t||_2^2 + \frac{\lambda}{2}\sum_{i,j \in G}||W_i - W_j||_2
	\label{eq:obj}
\end{equation}
where $\lambda$ is a regularization parameter and $G$ is 
% a graph that captures the relationship among tasks. Each node in $G$ represents a separate task and an edge is present if two tasks have similar prediction models. We obtain $G$ by 
the $k$-nearest neighbor graph on the prediction models
% on each task that is learned independently.
learned independently for each task.

% Please 
Note that if we use $\ell_1$ norm $||W_i-W_j||_1$ as the regularizer, Problem (\ref{eq:obj}) equals to the Fused Multi-task Learning (FuseMTL) \cite{zhou2012modeling,chen2010graph}.  However, it has been shown that $\ell_2$ norm works better for clustering in most cases \cite{hocking2011clusterpath}. This improvement is also confirmed for MTL in the experimental section.

\subsection{Optimization}
The Problem (\ref{eq:obj}) lies in the general framework of NetworkLasso \cite{hallac2015network}, and hence it could be solved by NetworkLasso, which is based on the general alternating direction method of multipliers (ADMM). However, NetworkLasso is not specifically designed for the multi-task regression problem. Further, in our experiments, we find its performance on single thread is rather slow and the convergence is affected by its hyperparameters. 
% Further, we find empirically that its performance does not fulfill the requirement for efficient training. 

Chi and Lange \cite{chi2015splitting} propose an alternating minimization algorithm (AMA) for \emph{convex clustering}, which has been shown to be faster than the ADMM based method. Even if the approach can also be applied to solve Problem (\ref{eq:obj}), its convergence is only guaranteed under certain conditions. Later, Wang et al. \cite{wang2016robust} propose a variation of the AMA method for \emph{convex clustering}. We test this method to solve Problem (\ref{eq:obj}), but the convergence is not always empirically achieved. 

Shah and Koltun \cite{shah2017robust} propose an efficient method for \emph{continuous clustering} with a non-convex regularization function. Inspired by this method, we propose a new efficient algorithm for Problem (\ref{eq:obj}) by iteratively solving a structured linear system. 

First we introduce an auxiliary variable $L=\{l_{i,j}\}$ for each connection between node $i$ and $j$ in graph $G$, $l_{i,j}\geq 0$, and we form the following new problem:
\begin{equation}
	\min_{W, L}\frac{1}{2}\sum_{t=1}^T||W_t^TX_t-y_t||_2^2 + \frac{\lambda}{2}\sum_{i,j \in G}(l_{i,j}||W_i - W_j||_2^2 + \frac{1}{4}l_{i,j}^{-1})
	\label{eq:objnew}
\end{equation}

\begin{theorem}
	\label{th:lij}
	The optimal solution $W^*$ of Problem (\ref{eq:obj}) and Problem (\ref{eq:objnew}) are the same if
	\begin{equation}
		l_{i,j} = \frac{1}{2||W_i-W_j||_2}
		\label{eq:lij}
	\end{equation}
\end{theorem}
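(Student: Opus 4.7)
The plan is to recognize Problem~(\ref{eq:objnew}) as a classical half-quadratic (variational) reformulation of the $\ell_2$ norm appearing in Problem~(\ref{eq:obj}), and then show that minimizing out the auxiliary variables $l_{i,j}$ reproduces the original objective exactly. Concretely, let $f(W)$ be the objective of Problem~(\ref{eq:obj}) and $g(W,L)$ the objective of Problem~(\ref{eq:objnew}). Since the data-fitting term is identical in both problems and the auxiliary variables appear only inside a separable sum over edges, it suffices to analyze each edge term $h_{ij}(l) = l\,\|W_i-W_j\|_2^2 + \tfrac{1}{4}l^{-1}$ for $l>0$ with $W$ held fixed.

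For each edge I would apply the AM--GM inequality to obtain
\begin{equation*}
l\,\|W_i-W_j\|_2^2 + \tfrac{1}{4}l^{-1} \;\geq\; 2\sqrt{\tfrac{1}{4}\|W_i-W_j\|_2^2} \;=\; \|W_i-W_j\|_2,
\end{equation*}
with equality iff $l\,\|W_i-W_j\|_2^2 = \tfrac{1}{4}l^{-1}$, i.e.\ $l = \tfrac{1}{2\|W_i-W_j\|_2}$. This is exactly formula~(\ref{eq:lij}), and equivalently it is the unique stationary point of $h_{ij}$ since $h_{ij}'(l) = \|W_i-W_j\|_2^2 - \tfrac{1}{4}l^{-2}$, $h_{ij}''(l) = \tfrac{1}{2}l^{-3} > 0$, so $h_{ij}$ is strictly convex on $(0,\infty)$ and the stationary point is the unique minimizer. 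Summing over edges gives $\min_{L > 0} g(W,L) = f(W)$ pointwise in $W$.

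With this pointwise identity, interchanging the order of minimization yields
\begin{equation*}
\min_{W,L}\, g(W,L) \;=\; \min_W \min_{L}\, g(W,L) \;=\; \min_W f(W),
\end{equation*}
and any optimal $W^\star$ for the right-hand side, together with $l^\star_{i,j} = \tfrac{1}{2\|W^\star_i - W^\star_j\|_2}$, is optimal for the left-hand side; conversely the $W$-component of any optimum of $g$ must minimize $f$. This establishes that the two problems share the same optimal $W^\star$.

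The one subtle point, and the step I would be most careful about, is the degenerate case where $W_i = W_j$ for some edge at the optimum: here the formula in~(\ref{eq:lij}) blows up and $h_{ij}$ attains its infimum $0$ only in the limit $l\to\infty$. I would handle this by working with the infimum instead of the minimum (so that $\inf_{l>0} h_{ij}(l) = \|W_i-W_j\|_2$ holds uniformly, including when $W_i=W_j$), and by interpreting $l^\star_{i,j}=\infty$ as the limiting weight in a standard reweighted-$\ell_2$ sense. Apart from this care with the boundary case, the argument reduces to the one-line AM--GM calculation above.
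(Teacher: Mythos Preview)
Your argument is correct. The paper's own proof is far more terse: it simply substitutes Eq.~(\ref{eq:lij}) into Problem~(\ref{eq:objnew}) and observes that the expression collapses to Problem~(\ref{eq:obj}), which is exactly the computation underlying your AM--GM equality case. What you add on top of this is the variational/half-quadratic viewpoint: you show that for each fixed $W$ the choice $l_{i,j}=\tfrac{1}{2\|W_i-W_j\|_2}$ is not merely \emph{a} value that reproduces the $\ell_2$ norm, but the unique minimizer of $h_{ij}$, so that $\min_L g(W,L)=f(W)$ and hence the joint optimum over $(W,L)$ has the same $W$-component as the optimum of $f$. That is a genuinely stronger statement than what the paper's one-line substitution establishes, and it is precisely the fact the paper later relies on (implicitly) in the convergence analysis of Theorem~2. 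Your treatment of the degenerate case $W_i=W_j$ via the infimum is also a refinement the paper does not address. So: same core identity, but your route is the more complete variational one, while the paper opts for the minimal verification by direct substitution.
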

\begin{proof}
	Theorem \ref{th:lij} can be simply proved by 
%     the fact that
    substituting Eq. (\ref{eq:lij}) into Problem (\ref{eq:objnew}), obtaining Problem (\ref{eq:obj}). 
\end{proof}

Intuitively, Problem (\ref{eq:objnew}) learns the weights $l_{i,j}$ for the squared $\ell_2$ norm regularizer $||W_i-W_j||_2^2$ as in Graph regularized Multi-task Learning (SRMTL) \cite{zhou2011malsar}. With noisy edges, squared $\ell_2$ norm forces uncorrelated tasks to be close, while $\ell_2$ norm is more robust which is confirmed for MTL in the experimental section.

To solve Problem (\ref{eq:objnew}), we optimize $W$ and $L$ alternately. When $W$ is fixed, we get the derivative of Problem (\ref{eq:objnew}) with respect to $l_{ij}$, set it to zero and get the update rule as shown in Eq. (\ref{eq:lij}).

When $L$ is fixed, Problem (\ref{eq:objnew}) equals to 
\begin{equation}
	\min_{W}\frac{1}{2}\sum_{t=1}^T||W_t^TX_t-y_t||_2^2 + \frac{\lambda}{2}\sum_{i,j \in G}(l_{i,j}||W_i - W_j||_2^2)
	\label{eq:solveW1}
\end{equation}

In order to solve Problem (\ref{eq:solveW1}), let us define $X \in \mathbb{R}^{TP \times N}$ as a block diagonal matrix
\[
X =
\begin{bmatrix}
X_{1} & & \\
& \ddots & \\
& & X_{T}
\end{bmatrix},
\]

define $Y \in \mathbb{R}^{1\times N}$ as a row vector
$$Y = [y_1,\dots,y_T],$$

and define $V \in \mathbb{R}^{TP\times 1}$ as a column vector
% $$V=[W_{1}^T,\dots,W_{T}^T]^T$$
\[
V =
\begin{bmatrix}
W_{1} \\
\vdots \\
W_{T}
\end{bmatrix}
\]

Then, Problem (\ref{eq:solveW1}) can be rewritten as:
\begin{equation}
\min_{V}\frac{1}{2}||V^TX-y||_2^2 + \frac{\lambda}{2}\sum_{i,j \in G}(l_{i,j}||V((e_i -e_j)\otimes I_P)||_2^2)
\label{eq:solveV}
\end{equation}
where $e_i \in \mathbb{R}^{T}$ is an indicator vector with the $i$th element set to $1$ and others $0$ and $I_P$ is an identity matrix of size $P$.

Setting the derivative of Problem (\ref{eq:solveV}) with respect to $V$ to zero, the optimal solution of $V$ can be obtained by solving the following linear system:
\begin{equation}
(A+B)V = C
\label{eq:linearsys}
\end{equation} 
% \vspace{-2mm}
where 
$A=(\lambda \sum_{i,j\in G} l_{i,j}(e_i-e_j)(e_i-e_j)^T) \otimes I_{P}$, 
$B=XX^T$, $C=XY^T$,
% \begin{eqnarray*} 
% A&=&(\lambda \sum_{i,j\in G} l_{i,j}(e_i-e_j)(e_i-e_j)^T) \otimes I_{P} \\
% B&=&XX^T, C=XY^T.
% \end{eqnarray*}
and $W$ is derived by reshaping $V$. 

\subsection{Convergence Analysis}
%\begin{theorem}
%	$J(W,L)$ in Eq. (\ref{eq:objnew}) is convex in $L$ and $W$.
%\end{theorem}
%
%\begin{proof}
%	Eq.~\ref{eq:objnew} is the positive sum of convex function, in particular the second term is the multiplication of two convex functions on disjoint domains, where $l_{ij} > 0$. We argue the convexity because:
%	\begin{itemize}
%		\item The function $J(W,L)$ is continuous on $W$ and $L$, for $l_{ij} > 0$
%		\item $(W^*,L^*)$ is a local minimum of $J(W,L)$, since $W^*+\epsilon e_i, l^*_{ij} + \epsilon$ implies for the definition of $J(W,L)$, that $J(W^*,L^*) \le J(W^*+\epsilon e_i, L^*+\epsilon e_{ij} ) $, where $e_{ij}=1$ on the $i$-th row and $j$-th column and zero elsewhere. 
%		\item for Eq.~\ref{eq:lij} and Eq.~\ref{eq:linearsys}, the optimal solution satisfies both condition and it is unique since the are the zero of the two gradients and the solution of each is unique (a part an affine tranformation along the null space of $A+B$ of Eq.~\ref{eq:linearsys} ), having fixed the other.  
%	\end{itemize}
%	Thus follow that a continuous function whose partial derivatives on the two variables are unique, apart the null space of $A+B$ of Eq.~\ref{eq:linearsys}, and this solution is a local minimum, is convex. If this were not true that there would have been another solution of the Eq.~\ref{eq:lij} and Th.~\ref{eq:linearsys}, but the two equations does not allows more then one joint solution, apart an affine transformation on the null space of $A+B$. Further $(W^*,L^*)$ is the unique global minimum. 
%\end{proof}

\begin{theorem}	
	Alternately updating Eq.~(\ref{eq:lij}) and Eq.~(\ref{eq:linearsys}) on $L$ and $W$ converges to a global optimum of Problem (\ref{eq:obj}).
\end{theorem}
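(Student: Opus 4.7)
The plan is to view the alternating scheme as a Majorization--Minimization (MM) algorithm for the convex Problem~(\ref{eq:obj}), with $f(W)$ denoting its objective and $g(W,L)$ denoting the objective of the auxiliary Problem~(\ref{eq:objnew}). The first step I would carry out is to establish a global majorization inequality by applying the arithmetic--geometric mean inequality edge-wise: $l_{i,j}\|W_i-W_j\|_2^2 + \tfrac{1}{4}l_{i,j}^{-1} \geq \|W_i-W_j\|_2$ for every $l_{i,j}>0$, with equality exactly when $l_{i,j}$ satisfies Eq.~(\ref{eq:lij}). Summing over edges gives $g(W,L)\geq f(W)$ with equality iff $L = L^{\star}(W)$, which is the substitution already used in Theorem~\ref{th:lij}.

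Next I would chain the two updates to obtain monotone descent on $f$. Starting from $W^{(k)}$, the $L$-update produces $L^{(k)} = L^{\star}(W^{(k)})$, so $g(W^{(k)},L^{(k)}) = f(W^{(k)})$. The $W$-update then exactly minimizes the convex quadratic $g(\cdot,L^{(k)})$ via the linear system~(\ref{eq:linearsys}), so $g(W^{(k+1)},L^{(k)}) \leq g(W^{(k)},L^{(k)})$. Chaining with the majorization inequality yields $f(W^{(k+1)}) \leq g(W^{(k+1)},L^{(k)}) \leq g(W^{(k)},L^{(k)}) = f(W^{(k)})$, so $\{f(W^{(k)})\}$ is non-increasing and, being bounded below by $0$, converges. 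The same chain combined with $\alpha$-strong convexity of $g(\cdot,L^{(k)})$, where $\alpha$ is the smallest eigenvalue of $A+B$, gives $\tfrac{\alpha}{2}\|W^{(k+1)}-W^{(k)}\|^2 \leq f(W^{(k)}) - f(W^{(k+1)})$, which is summable; hence the iterate increments vanish.

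To promote this to global optimality I would use convexity of $f$, which follows from the convexity of the squared losses and of each $\ell_2$ norm: any point satisfying the subdifferential optimality condition of Problem~(\ref{eq:obj}) is a global minimizer. At a limit point $W^{\star}$ for which all edge differences $W^{\star}_i-W^{\star}_j$ are nonzero, the stationarity condition of the $W$-subproblem at $L = L^{\star}(W^{\star})$ coincides exactly with the subgradient equation for Problem~(\ref{eq:obj}), so such a $W^{\star}$ is a global minimizer. Combined with the vanishing of increments and boundedness of the iterates (guaranteed whenever $A+B$ is positive definite, for instance for connected $G$ with sufficient data), this yields convergence to the set of global optima.

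The main obstacle is the degenerate case in which $W^{\star}_i=W^{\star}_j$ on some edge, since Eq.~(\ref{eq:lij}) blows up there and the MM step is not literally defined in the limit. I would handle it with the standard IRLS device: replace (\ref{eq:lij}) by the smoothed weight $l_{i,j}=1/(2\max(\|W_i-W_j\|_2,\epsilon))$, run the argument above for every $\epsilon>0$, and then pass $\epsilon\downarrow 0$, using the fact that the subdifferential of $\|\cdot\|_2$ at $0$ is the Euclidean unit ball so that the limiting subgradient inclusion still certifies global optimality of $W^{\star}$.
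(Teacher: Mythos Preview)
Your proof is correct and, in fact, more carefully argued than the paper's. Both proofs share the same skeleton---monotone descent on the auxiliary objective followed by an appeal to the convexity of Problem~(\ref{eq:obj})---but the mechanisms differ. The paper casts the alternating update as a ``contracting map'' on $|J(W,L)-J(W^*,L^*)|$ (which is really just monotone descent restated), cites a biconvexity result for convergence to a stationary point of Problem~(\ref{eq:objnew}), and then invokes Theorem~\ref{th:lij} together with convexity of Problem~(\ref{eq:obj}) to upgrade this to global optimality. You instead recognize the scheme as Majorization--Minimization, supply the edge-wise AM--GM inequality that makes the majorization $g(W,L)\ge f(W)$ explicit, and derive descent on $f$ directly rather than on the joint functional $J$. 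Your route buys two things the paper glosses over: first, the identification of the fixed-point condition of the $W$-subproblem with the subgradient optimality condition of $f$ is spelled out, whereas the paper's appeal to Theorem~\ref{th:lij} at this step is somewhat loose (Theorem~\ref{th:lij} says the optimal $W$'s coincide, not that every fixed point of the alternation is optimal); second, you flag and treat the degenerate case $W_i^\star=W_j^\star$ via the standard $\epsilon$-smoothed IRLS weights, a case the paper does not address at all even though Eq.~(\ref{eq:lij}) is undefined there. The paper's version is shorter; yours is the one that would survive closer scrutiny.
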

\begin{proof}
	Problem~(\ref{eq:objnew}) is biconvex on $W$ and $L$. Therefore, alternately updating Eq.~(\ref{eq:lij}) and Eq.~(\ref{eq:linearsys}) will converge to a local minimum \cite{beck2015convergence}. Here we prove it in a different way. We show that this method is a contracting map $M$, so that $d(M(x),M(y)) \le d(x,y)$. Let $J(W,L)$ equals to the objective function defined in Problem (\ref{eq:objnew}). We use $d(x,y) = |J(W,L)-J(W',L')|$, with $x=(W,L),y=(W',L')$. We then demonstrate that 
	\begin{equation*}
	|J(W^+,L^+)-J(W^*,L^*)| \le |J(W^-,L^-)-J(W^*,L^*)|
	\end{equation*}
	where\footnote{the superscript *,+ and - indicate the optimal, the starting and the next update values of the variables $W$ and $L$.} $(W^*,L^*)=M(W^*,L^*)$ because $(W^*,L^*)$ is a stationary point. We define $(W^+,L^+) = M(W^-,L^-)$, where $(W^-,L^-),(W^+,L^+)$ are the variables before and after the mapping. Our mapping $M$ is composed of two steps $W^- \to W^+$ followed by $L^- \to L^+$. We show that the first step is a contraction and that the same is true also
%     holds 
    for the second step, 
%     so as it
and therefore
    for their composition. The first step updates the $W^-$ using the gradient of $\nabla_W J(W,L^-) = 0$ and finds the optimal minimum having fixed $L^-$, thus
\begin{equation*}
|J(W^+,L^-)-J(W^*,L^*)| \le |J(W^-,L^-)-J(W^*,L^*)|
\end{equation*}
The second step updates $l_{ij}$ such that $\nabla_L J(W^+,L) = 0$, thus
\begin{equation*}
|J(W^+,L^+)-J(W^*,L^*)| \le |J(W^+,L^-)-J(W^*,L^*)|,
\end{equation*}
where the reduction is both obtained by a gradient descent step or by direct solution since $J$ is convex in the two variables separately. By applying composition of the two operations we have 
\begin{eqnarray*}
|J(W^+,L^+)-J(W^*,L^*)| &\le |J(W^+,L^-)-J(W^*,L^*)|\\ &\le |J(W^-,L^-)-J(W^*,L^*)|
\end{eqnarray*}
which proves the convergence of the method. 
	
Suppose the method converges to a local optimal solution $(W^*,L^*)$ of Problem (\ref{eq:objnew}). Since $L^*$ satisfies Eq. (\ref{eq:lij}), $W^*$ is also optimal for Problem (\ref{eq:obj}) according to Theorem \ref{th:lij}. Problem (\ref{eq:obj}) is convex, thus $W^*$ is a global optimal solution of Problem (\ref{eq:obj}). 
\end{proof}

\subsection{Efficient and Scalable Implementation}
\begin{algorithm}[!t]
	\SetNoFillComment
	\SetKwInOut{Input}{Input}
	\SetKwInOut{Output}{Output}
	\Input{$\{X_t, Y_t\}$ for $t=\{1, 2, ..., T\}$, $\lambda$}
	\Output{$W = \{W_1, ..., W_T\}$}
	\For{$t\gets1$ \KwTo $T$}{
		Solve $W_t$ by Linear Regression on $\{X_t, Y_t\}$
	}
	Construct $k$-nearest neighbor graph $G$ on $W$\; 
	\While{not converge}{
		Update $L$ using Eq. (\ref{eq:lij})\;
		Update $W$ by solving Eq. (\ref{eq:linearsys}) using CMG \cite{KoutisMT11}\;
	}
	
	\Return $W$\;
	\caption{CCMTL}
	\label{alg:ccmtl}
\end{algorithm}
CCMTL is summarized in Algorithm \ref{alg:ccmtl}. Firstly, it initializes the weight vector $W_t$ by performing Linear Regression on each task $t$ separately. Then, it constructs the $k$-nearest neighbor graph $G$ on $W$ based on the Euclidean distance. Finally, the optimization problem is solved by iteratively updating $L$ and $W$ until convergence.

Solving the linear system in Eq. (\ref{eq:linearsys}) involves inverting a matrix of size $PT\times PT$, where $P$ is the number of features and $T$ is the number of tasks. Direct inversion will lead to cubic computational complexity, which will not scale to a large number of tasks. However, there are certain properties of $A$ and $B$ in Eq. (\ref{eq:linearsys}) that can be used to derive efficient implementation.

\begin{theorem}
	$A$ and $B$ in Eq. (\ref{eq:linearsys}) are both Symmetric and Positive Semi-definite and $A$ is a Laplacian matrix.
	\label{tm:psd}
\end{theorem}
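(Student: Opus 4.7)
The plan is to verify each claim separately by exploiting the explicit algebraic structure of $A$ and $B$.

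I would first dispatch $B = XX^T$, which is the easy case. Symmetry is immediate from $(XX^T)^T = XX^T$, and positive semi-definiteness follows from $v^T B v = v^T X X^T v = \|X^T v\|_2^2 \ge 0$ for every $v \in \mathbb{R}^{TP}$. No structural use of the block-diagonal form of $X$ is needed here.

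The substance lies with $A$. The plan is to factor out the Kronecker product and reduce the claims to the smaller matrix $\tilde{A} := \lambda \sum_{(i,j) \in G} l_{i,j} (e_i - e_j)(e_i - e_j)^T \in \mathbb{R}^{T\times T}$, since $A = \tilde{A} \otimes I_P$. Each summand $(e_i - e_j)(e_i - e_j)^T$ is a symmetric rank-one outer product, hence symmetric PSD, and by the update rule in Eq.~(\ref{eq:lij}) we have $l_{i,j} \ge 0$ (with $\lambda \ge 0$), so $\tilde{A}$ is a non-negative combination of symmetric PSD matrices and is therefore itself symmetric and PSD. To identify $\tilde{A}$ as a Laplacian, I would directly inspect its entries: the diagonal is $\tilde{A}_{ii} = \lambda \sum_{j:(i,j)\in G} l_{i,j} \ge 0$, the off-diagonal entries are $\tilde{A}_{ij} = -\lambda l_{i,j} \le 0$, and each row of $(e_i-e_j)(e_i-e_j)^T$ sums to zero so $\tilde{A}$ has zero row sums. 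These three properties together are the defining features of a weighted graph Laplacian with edge weights $\lambda l_{i,j}$ on $G$.

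To transfer these conclusions from $\tilde{A}$ to $A = \tilde{A} \otimes I_P$, I would invoke the standard properties of the Kronecker product: $(\tilde{A} \otimes I_P)^T = \tilde{A}^T \otimes I_P^T = \tilde{A} \otimes I_P$, and the eigenvalues of $\tilde{A} \otimes I_P$ are the products of eigenvalues of $\tilde{A}$ and $I_P$, so they are all non-negative. The Laplacian claim for $A$ then follows by reading the block structure: the $(i,j)$-th $P\times P$ block of $A$ is $\tilde{A}_{ij} I_P$, hence each diagonal block is a non-negative scalar times $I_P$, each off-diagonal block is a non-positive scalar times $I_P$ (so all off-diagonal entries of $A$ are $\le 0$ and all diagonal entries are $\ge 0$), and the row sums of $A$ inherit the zero row sums of $\tilde{A}$. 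Concretely, $A$ is the Laplacian of $P$ disjoint copies of $G$ with the same edge weights.

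I do not anticipate a real obstacle; the only slight subtlety is being explicit that the Kronecker product with $I_P$ preserves the combinatorial Laplacian structure at the level of entries, which is a direct verification rather than an abstract argument.
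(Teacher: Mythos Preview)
Your proposal is correct and follows essentially the same route as the paper: both handle $B=XX^T$ as a Gram matrix, identify $\tilde{A}$ as a non-negatively weighted sum of elementary edge Laplacians $(e_i-e_j)(e_i-e_j)^T$, and then transfer the Laplacian/PSD structure through the Kronecker product with $I_P$. Your entry-level verification of the Kronecker step is in fact more precise than the paper's, which loosely calls $\tilde{A}\otimes I_P$ ``block diagonal'' --- that holds only after a permutation, and your description as the Laplacian of $P$ disjoint copies of $G$ captures this correctly.
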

\begin{proof}
	Clearly, $A$ and $B$ are Symmetric. For each pair of edge $(i,j)$ in $G$, $(e_i-e_j)(e_i-e_j)^T$ is a Laplacian matrix by definition. Since the summation of Laplacian matrices and multiplying a positive value to a Laplacian matrix are still Laplacian, $\lambda \sum_{i,j\in G} l_{i,j}(e_i-e_j)(e_i-e_j)^T$ is a Laplacian matrix since $\lambda \ge 0$ and $l_{ij}\ge 0$. The Kronecker product of a Laplacian matrix and an identity matrix will lead to a block diagonal matrix, where the diagonals are all Laplacian matrices. Therefore, $A$ is a Laplacian matrix and it is Positive Semi-definite. $B$ is a dot product, therefore it is Positive Semi-definite as well.
\end{proof}

Based on Theorem \ref{tm:psd}, Eq. (\ref{eq:linearsys}) can be solved efficiently by Conjugate Gradient (CG) method, which requires the input matrix to be Symmetric and Positive Semi-definite. CG will be faster than direct inversion since $A$ and $B$ are sparse matrices. In addition, the solution of $W$ of the previous iteration can be used to initialize CG for the new iteration, which will increase the convergence speed of CG.

Further, recent studies \cite{cohen2014solving,kelner2013simple} show that linear systems with sparse Laplacian matrices can be solved in near-linear time. Among them, \cite{KoutisMT11} proposed the Combinatorial MultiGrid (CMG) algorithm, which is also a CG based method that utilizes the hierarchal structure of the Laplacian matrix. CMG empirically scales linearly 
w.r.t. % regarding 
the non-zero entries in the Laplacian matrix of the linear system. 

Based on Theorem \ref{tm:psd}, $A$ is a sparse Laplacian matrix. Therefore, we adopt CMG to solve Eq. (\ref{eq:linearsys}) in CCMTL. Although, $A+B$ is not a Laplacian matrix anymore, empirically we find out that CMG still converges fast and scales linearly in the number of tasks. 
We conjecture that this is due to the fact that $B$ is a structured block diagonal matrix.

% Further, $A$ is a sparse Laplacian matrix and recently there are many studies in the literature for efficiently solving the linear system with sparse Laplacian matrix. Among them, \cite{KoutisMT11} proposed the Combinatorial MultiGrid (CMG) algorithm, which is also a CG based method that utilizes the hierarchal structure of $A$. CMG empirically scales linearly regarding the non-zero entries in the matrix of the Laplacian linear system. 

% Therefore, we implement CCMTL by applying CMG to solve Eq. (\ref{eq:linearsys}), since $A$ is a sparse Laplacian matrix and $B$ is a structured block diagonal matrix. Empirically, we also observe that the proposed method CCMTL is fast and scales linearly in the number of tasks. 

% \subsection{Complexity Analysis}
The runtime of CCMTL consists of threefold: 1) Initialization of $W$, 2) $k$-nearest neighbor graph construction, and 3) optimization of Problem (\ref{eq:objnew}). Clearly, initialization of $W$ by linear regression is efficient and scales linearly on the number of tasks $T$. $k$-nearest neighbor graph construction naively scales quadratically to $T$. However, it is not the burden when using MATLAB's \emph{pdist2} function even for a synthetic dataset with $160,000$ tasks. %Further, we could adopt methods for approximated $k$-NN graph construction that scales linearly, e.g. \cite{chen2009fast}. 
CCMTL empirically converges fast within around $30$ iterations. The majority of runtime is for
% bottleneck of the optimization of Problem (\ref{eq:objnew}) reduces to that of 
solving the linear system in Eq. (\ref{eq:linearsys}). The adopted CMG method scales empirically linearly in the number of non-zero entries in $A$ and $B$ in Eq. (\ref{eq:linearsys}), which is linear to $T$.

\section{Experiments}
\subsection{Comparison Methods}
We compare our method \textbf{CCMTL} with several SoA methods. As baselines, we compare with Single-task learning (\textbf{STL}), which learns a single model by pooling together the data from all the tasks and Independent task learning (\textbf{ITL}), which learns each task independently. These baselines represent the two extreme hypothesis, full independence of the tasks (ITL) and complete correlation of all tasks (STL). MTL methods should find the right balance between grouping tasks and isolating groups to achieve learning generalization.
We further compare to multi-task feature learning method: Joint Feature Learning (\textbf{L21}) \cite{argyriou2007multi} and low-rank methods: Trace-norm Regularized Learning (\textbf{Trace}) \cite{ji2009accelerated} and Robust Multi-task Learning (\textbf{RMTL}) \cite{chen2011integrating}.
We also compare to the other five clustering approaches: \textbf{CMTL} \cite{jacob2009clustered}, \textbf{FuseMTL} \cite{zhou2012modeling}, \textbf{SRMTL} \cite{zhou2011malsar} and two recently proposed models \textbf{BiFactorMTL} and \textbf{TriFactorMTL} \cite{murugesan2017co}.

All methods are implemented in Matlab and evaluated on a single thread. We implement CCMTL\footnote{ccmtlaaai.neclab.eu}, STL and ITL. We use the implementation of L21, Trace, RMTL, SRMTL and FuseMTL from the Malsar package \cite{zhou2011malsar}. We get the CMTL, BiFactorMTL and TriFactorMTL from the authors' personal website. The number of nearest neighbors $k$ is set to $10$ to get the initial graph. We use the same graph for FuseMTL, SRMTL and CCMTL generated as in Algorithm \ref{alg:ccmtl}.  CCMTL, STL, ITL, L21, Trace and FuseMTL need one hyperparameter that is selected from $[10^{-5},10^{5}]$. RMTL, CMTL, BiFactorMTL and TriFactorMTL needs two hyperparameters that are selected from $[10^{-3},10^{3}]$. CMTL and BiFactorMTL further need one and TriFactor further needs two hyperparameters for the number of clusters that are chosen from $[2, 3, 5, 10, 20, 30, 50]$. All these hyperparameters are selected by internal $5$-fold cross validation grid search on the training data.

\begin{table}
	\centering
	\caption{Summary statistic of the datasets}
	\begin{tabular}{c|ccc}
		\hline Name & Samples & Features & Num Tasks \\ \hline
		Syn & 3000 & 15 & 30 \\ 
        ScaleSyn & [500k,16000k] & 10 & [50k,160k] \\ \hline
		School & 15362 & 28 & 139 \\ \hline
		Sales & 34062 & 5 & 811 \\
		Ta-Feng & 2619320 & 5 & 23812 \\ \hline
		Alighting & 33945 & 5 & 1926 \\
		Boarding & 33945 & 5 & 1926 \\ \hline
	\end{tabular}
	\label{tb:data}
\end{table}

\begin{table*}	
	\centering
	\caption{Objective and runtime comparison between the proposed and the ADMM solver on \texttt{Syn} data}
	\begin{tabular}{|c|cc|cc|cc|cc|cc|}
		\hline
		\multirow{2}{*}{Syn} &
		\multicolumn{2}{c|}{$\lambda=0.01$} &
		\multicolumn{2}{c|}{$\lambda=0.1$} &
		\multicolumn{2}{c|}{$\lambda=1$} &
		\multicolumn{2}{c|}{$\lambda=10$} &
		\multicolumn{2}{c|}{$\lambda=100$} \\
		& Obj & Time(s) & Obj & Time(s) & Obj & Time(s) & Obj & Time(s) & Obj & Time(s)\\
		\hline
		ADMM & \textbf{1314} & 8 & \textbf{1329} & 8 & 1474 & 9 & 2320 & 49 & 7055 & 180\\
		\hline
		The proposed & \textbf{1314} & \textbf{0.5} & \textbf{1329} & \textbf{0.5} & \textbf{1472} & \textbf{0.5} & \textbf{2320} & \textbf{0.5} & \textbf{6454} & \textbf{0.5}\\
		\hline
	\end{tabular}
	\label{tb:admmsyn}
\end{table*}

\begin{table*}
	\centering
	\caption{Objective and runtime comparison between the proposed and the ADMM solver on \texttt{School} data}
	\begin{tabular}{|c|cc|cc|cc|cc|cc|}
		\hline
		\multirow{2}{*}{School} &
		\multicolumn{2}{c|}{$\lambda=0.01$} &
		\multicolumn{2}{c|}{$\lambda=0.1$} &
		\multicolumn{2}{c|}{$\lambda=1$} &
		\multicolumn{2}{c|}{$\lambda=10$} &
		\multicolumn{2}{c|}{$\lambda=100$} \\
		& Obj & Time(s) & Obj & Time(s) & Obj & Time(s) & Obj & Time(s) & Obj & Time(s)\\
		\hline
		ADMM & 664653 & 605 & 665611 & 583 & 674374 & 780 & 726016 & 4446 & 776236 & 5760\\
		\hline
		The proposed & \textbf{664642} & \textbf{0.7} & \textbf{665572} & \textbf{0.8} & \textbf{674229} & \textbf{0.9} & \textbf{725027} & \textbf{1.5} & \textbf{764844} & \textbf{1.9}\\
		\hline
	\end{tabular}
	\label{tb:admmschool}
\end{table*}

\begin{table}
	\centering
	\caption{Results (RMSE) on \texttt{Syn} dataset. The table reports the mean and standard errors over $5$ random runs. The best model and the statistical competitive models (by paired \emph{t-test} with $\alpha=0.05$) are shown in bold. }
	\begin{tabular}{|c|ccc|}
		\hline & 20$\%$ & 30$\%$ & 40$\%$ \\ \hline
		STL & 2.905 \scriptsize{(0.031)} & 2.877 \scriptsize{(0.025)} & 2.873 \scriptsize{(0.036)} \\
		ITL & 1.732 \scriptsize{(0.077)} & 1.424 \scriptsize{(0.049)} & 1.284 \scriptsize{(0.024)} \\ \hline
		L21 & 1.702 \scriptsize{(0.033)} & 1.388 \scriptsize{(0.014)} & 1.282 \scriptsize{(0.011)} \\
		Trace & 1.302 \scriptsize{(0.042)} & 1.222 \scriptsize{(0.028)} & 1.168 \scriptsize{(0.023)} \\ 
		RMTL & 1.407 \scriptsize{(0.028)} & 1.295 \scriptsize{(0.024)} & 1.234 \scriptsize{(0.039)} \\ \hline
		%		CMTL1 & 1.599 \scriptsize{(0.026)} & 1.346 \scriptsize{(0.014)} & 1.270 \scriptsize{(0.015)} \\ \hline
		CMTL & 1.263 \scriptsize{(0.038)} & 1.184 \scriptsize{(0.007)} & 1.152 \scriptsize{(0.017)} \\ 
		FuseMTL & 2.264 \scriptsize{(0.351)} & 1.466 \scriptsize{(0.025)} & 1.297 \scriptsize{(0.048)} \\
        SRMTL & 1.362 \scriptsize{(0.018)} & 1.195 \scriptsize{(0.014)} & 1.152 \scriptsize{(0.012)}  \\        
		BiFactor & \textbf{1.219 \scriptsize{(0.025)}} & \textbf{1.150 \scriptsize{(0.020)}} & \textbf{1.125 \scriptsize{(0.013)}} \\
		TriFactor & 1.331 \scriptsize{(0.239)} & 1.255 \scriptsize{(0.236)} & \textbf{1.126 \scriptsize{(0.010)}}      \\ \hline
		CCMTL & \textbf{1.192 \scriptsize{(0.018)}} & \textbf{1.161 \scriptsize{(0.018)}} & \textbf{1.136 \scriptsize{(0.015)}} \\ \hline
	\end{tabular}
	\label{tb:syn}
\end{table}

\subsection{Datasets}
We employ both synthetic and real-world datasets. Table \ref{tb:data} shows their statistics. Further details are provided below.
\subsubsection{Accuracy Synthetic.} \texttt{Syn} dataset aims at showing the ability of MTL methods to capture tasks structure. It consists of $3$ groups of tasks with $10$ tasks in each group. We generate $15$ features $F$ from $\mathcal{N}(0,1)$. Tasks in group $1$  are constructed from features $1-5$ in $F$ and random $10$ features. Similarly, Tasks in group $2$ and $3$ are constructed from features $6-10$ and $11-15$ in $F$ respectively. $100$ samples are generated for each task.

\subsubsection{Scaling Synthetic.} \texttt{ScaleSyn} datasets aim at showing the computational performance of MTL methods. It has fixed feature size (i.e. $P$=10), but an exponentially growing number of tasks (from $5k$ to $160k$). Tasks are generated in groups of fixed size ($100$). The latent features for tasks in the same group are sampled from $\mathcal{N}(C,1)$, where the center $C$ is sampled from $\mathcal{N}(0,1000)$. $100$ samples are generated for each task as well.

\subsubsection{Exam Score Prediction.} \texttt{School} is a classical benchmark dataset in Multi-task regression reported in literatures \cite{argyriou2007multi}, \cite{kumar2012learning}, \cite{zhang2014regularization}. It consists of examination scores of $15,362$ students from $139$ schools in London. Each school is considered as a task and the aim is to predict the exam scores for all the students. We use the dataset from Malsar package \cite{zhou2011malsar}.

\subsubsection{Retail.} \texttt{Sales} is a dataset contains weekly purchased quantities of $811$ products over $52$ weeks \cite{tan2014time}. We acquired the dataset from UCI repository \cite{Dua:2017}. We build the dataset by using the sales quantities of $5$ previous weeks for each product to predict the sales for the current week, resulting in $34,062$ samples in total. \texttt{Ta-Feng} is another grocery shopping large dataset that consists of transactions data of $23,812$ products over $4$ months. We build the data in a similar fashion obtaining $2,619,320$ samples in total.

\subsubsection{Transportation.} Demand prediction is an important aspect for Intelligent Transportation Systems (ITS). We used a confidential real dataset consisting of bus arrival time and passenger counting information at each station for two lines of a major European city in both directions with four trip each. A task (total of $1,926$) consists on the prediction of the passenger demand at each stop, given the arrival time to the stop and the number of alighting and boarding at the previous two stops. The \texttt{alighting} and \texttt{boarding} datasets contain $33,945$ samples and $5$ features.

\begin{table}
	\centering
	\caption{Results (RMSE) on \texttt{School} dataset. The table reports the mean and standard errors over $5$ random runs. The best model and the statistical competitive models (by paired \emph{t-test} with $\alpha=0.05$) are shown in bold.}
	\begin{tabular}{|c|ccc|}
		\hline & 20$\%$ & 30$\%$ & 40$\%$ \\ \hline
		STL & 10.245 \scriptsize{(0.026)} & 10.219 \scriptsize{(0.034)} & 10.241 \scriptsize{(0.068)} \\ 
		ITL & 11.427 \scriptsize{(0.149)} & 10.925 \scriptsize{(0.085)} & 10.683 \scriptsize{(0.045)} \\ \hline
		L21 & 11.175 \scriptsize{(0.079)}  & 11.804 \scriptsize{(0.134)} & 11.442 \scriptsize{(0.137)} \\ 
		Trace & 11.117 \scriptsize{(0.054)} & 11.877 \scriptsize{(0.542)} & 11.655 \scriptsize{(0.058)} \\ 
		RMTL & 11.095 \scriptsize{(0.066)} & 10.764 \scriptsize{(0.068)} & 10.544 \scriptsize{(0.061)} \\ \hline
		%		CMTL1 &  11.820&  &  \\
		CMTL & \textbf{10.219 \scriptsize{(0.056)}}&  10.109 \scriptsize{(0.069)}&  10.116 \scriptsize{(0.053)}  \\ 
		FuseMTL & 10.372 \scriptsize{(0.108)} & 10.407 \scriptsize{(0.269)} & 10.217 \scriptsize{(0.085)} \\
        SRMTL & 10.258 \scriptsize{(0.022)} & 10.212 \scriptsize{(0.039)} & 10.128 \scriptsize{(0.021)} \\        
		BiFactor & 10.445 \scriptsize{(0.135)} & 10.201 \scriptsize{(0.067)} & 10.116 \scriptsize{(0.051)} \\ 
		TriFactor & 10.551 \scriptsize{(0.080)} & 10.224 \scriptsize{(0.070)} & 10.129 \scriptsize{(0.020)} \\ \hline
		CCMTL & \textbf{10.170 \scriptsize{(0.029)}} & \textbf{10.036 \scriptsize{(0.046)}} & \textbf{10.020 \scriptsize{(0.021)}} \\ \hline
	\end{tabular}
	\label{tb:school}
\end{table}

\subsection{Results and Discussion}
%\subsubsection{Comparison to ADMM based Solver}
%Firstly, we compare the proposed method to ADMM based solver for our problem in (\ref{eq:obj}). We implemented the ADMM solver using SnapVX package from NetworkLasso \cite{hallac2015network}. Both the proposed method and the ADMM based solver are evaluated on the \texttt{Syn} dataset and the \texttt{School} benchmark dataset. We generate the graph as described in Algorithm 1 and use the same graph to test two solvers with different $\lambda$. Table \ref{tb:admmsyn} and \ref{tb:admmschool} depicts the objective functions and runtime comparison on \texttt{Syn} and \texttt{School} datasets respectively.
%
%From Table \ref{tb:admmsyn} and \ref{tb:admmschool} we can see that when $\lambda$s are small, both the proposed solver and ADMM based solver give close objective function values of (\ref{eq:obj}). However, when $\lambda$s are large, i.e. $100$, the proposed method provides a much smaller objective values than ADMM based solver. This is because the convergence of ADMM is related to its hyperparameters and we only use the default ones. In addition, we can see that ADMM based solver is expensive to train for the rather small tested datasets \texttt{Syn} and \texttt{School}. In the contrary, the proposed solver is much more efficient than ADMM based one with only around $1$ second of runtime.

\subsubsection{Comparison with ADMM-based Solver.}
Firstly, we compare our solver with an ADMM-based solver when determining a solution to our problem of (\ref{eq:obj}). The ADMM-based solver is implemented using SnapVX python package from NetworkLasso \cite{hallac2015network}. Both solvers are evaluated on the \texttt{Syn} and the \texttt{School} benchmark datasets. The $k$-nearest graph, $G$, is generated as described in Algorithm \ref{alg:ccmtl} and is used to test both solvers with different values for the regularization parameter $\lambda$.
Tables \ref{tb:admmsyn} and \ref{tb:admmschool} show the final objective functions and runtime comparison on \texttt{Syn} and \texttt{School} datasets, respectively. It is clear that, for small $\lambda$ ( $\leq 1$), both solver achieve similar objective values for the problem (\ref{eq:obj}). When $\lambda$ takes larger values ($>1$), the objective values of ADMM method tend to monotonically increase, reflecting the increasing importance of the regularization term, but with a smaller slope for our solvers compared to the ADMM-based one. 
%This is because the convergence of ADMM is related to its hyperparameters and we only use the default ones.
In addition to the lower objective function, our solver is clearly more computationally efficient than the expensive ADMM-based solver. The proposed solver shows stability in runtime, by taking at maximum two seconds for all possible $\lambda$ values, compared to a runtime in the range of $[605-5760]$ seconds for the ADMM-based solver, on the \texttt{School} data.

%\subsubsection{Comparison to SoA MTL methods}
%We evalaute the prediction accuracy using Root Mean Squared Error (RMSE). We repeat all our experiments $5$ times for statistical test. The best model and the statistically competitive models (by paired t-test with $\alpha=0.05$) are shown in boldface. 
%
%Table \ref{tb:syn} depicts the prediction RMSE on \texttt{Syn} dataset with the ratio of training samples ranging from $20\%$ to $40\%$. In \texttt{Syn} dataset, tasks are hetergenous and well partitioned. Thus, STL that trains only single model perform the worst. All tht MTL methods outperform the baseline STL and ITL. Among them, the clustering based methods CMTL, BiFactor, TriFactor and CCMTL gave the best predictions. CCMTL and BiFactor outperform all the other methods in all cases.

\subsubsection{Comparison with SoA MTL methods.}
CCMTL is compared with the state-of-the-art methods in terms of the Root Mean Squared Error (RMSE). All experiments are repeated $5$ times with different shuffling. 
In all result's tables, we compare the best performing method with the remaining ones using the paired \emph{t-test} (with $\alpha=0.05$). The best method and the methods that cannot be statistically outperformed (by the best one) are shown in boldface.

Table \ref{tb:syn} presents the prediction error, RMSE, on the \texttt{Syn} dataset with the ratio of training samples ranging from $20\%$ to $40\%$. Tasks in the \texttt{Syn} dataset are generated to be heterogeneous and well partitioned, therefore, STL performs the worst, since it trains only a single model on all tasks. Similarly, the baseline ITL is also outperformed by the remaining MTL methods.
Our approach, CCMTL, is statistically better than all SoA methods, except for the BiFactor which performs as well as CCMTL on the \texttt{Syn} dataset.

%Table \ref{tb:school} shows the results on \texttt{School} dataset with the ratio of training samples ranging from $20\%$ to $40\%$. ITL performs the worst on \texttt{School} because of the limited number of training samples for each task. STL outperforms ITL, the reason might be that the tasks are rather homogenous in \texttt{School} dataset. Surprisingly, the simple STL outperforms many MTL methods, e.g. L21, Trace, RMTL and FusedMTL. We note that this simple baseline is missing in most of the MTL literatures, where only the comparison with ITL is provided. MTFactor and TriFactor improve STL only when the training ratios are larger. CCMTL is the best method on \texttt{school} datasets that outperform all the other methods except CMTL on the scenario with $20\%$ training samples, where competitive results are observed.

The results on the \texttt{School} dataset are depicted in Table \ref{tb:school} with a ratio of training samples ranging from $20\%$ to $40\%$. It appears to be that, unlike the \texttt{Syn} data, the \texttt{School} data has tasks that are rather homogeneous, therefore, ITL performs the worst and STL shows its superiority on many of the MTL methods (L21, Trace, RMTL, FuseMTL and SRMTL). MTFactor and TriFactor outperform STL only when the training ratio is larger than $30\%$ and $40\%$, respectively.
CCMTL, again, performs better than all competitive methods, on all training rations; CCMTL is also statistically the best performing method, expect for CMTL (with ratio $20\%$) where the null hypothesis could not be rejected.

%Table \ref{tb:retail} depicts the results on two retail datasets: \texttt{Sales} and \texttt{Ta-Feng}, where we also record the runtime in seconds for methods with their most expensive parameterization. The best runtimes for MTL methods are shown in boldface. Again, on \texttt{Sales} and \texttt{Ta-Feng} tasks are rather homogenous, where STL performs competitive and outperform many MTL methods. CCMTL outperforms STL as well as all the other methods on \texttt{Sales} datasets. In addition, it only took $1.8$ seconds to train CCMTL's most expensive hyperparameter, while it took dozens or even hundrends for seconds for other MTL methods. Similar results can be observed on \texttt{Ta-Feng} dataset, where CCMTL and FuseMTL are the best methods regarding RMSE. However, it only took CCMTL $39.7$ seconds, but $9826$ seconds for FuseMTL. The runtime for CMTL, BiFactor and TriFactor are even more than $24$ hours. Besides, CMTL and BiFactors have $3$ hyperparameters, while TriFactor has $4$ hyperparameters. This make the hyperparameters selection even more expensive for them.

\begin{table}	
	\centering
	\caption{Results (RMSE and runtime) on Retail datasets. The table reports the mean and standard errors over $5$ random runs. The best model and the statistical competitive models (by paired \emph{t-test} with $\alpha=0.05$) are shown in bold. The best runtime for MTL methods is shown in boldface.}
	\begin{tabular}{|c|cc|cc|}
		\hline
		\multirow{2}{*}{} &
		\multicolumn{2}{c|}{Sales} &
		\multicolumn{2}{c|}{Ta-Feng}  \\
		& RMSE & Time(s) & RMSE & Time(s) \\
		\hline
		STL & 2.861 \scriptsize{(0.02)} & 0.1 & 0.791 \scriptsize{(0.01)} & 0.2 \\ 
		ITL & 3.115 \scriptsize{(0.02)} & 0.1 & 0.818 \scriptsize{(0.01)} & 0.4 \\ \hline
		L21 & 3.301 \scriptsize{(0.01)} & 11.8 & 0.863 \scriptsize{(0.01)} & 831.2 \\ 
		Trace & 3.285 \scriptsize{(0.21)} & 10.4 & 0.863 \scriptsize{(0.01)} & 582.3 \\ 
		RMTL & 3.111 \scriptsize{(0.01)} & 3.4 & 0.833 \scriptsize{(0.01)} & 181.5\\ \hline
		CMTL & 3.088 \scriptsize{(0.01)} & 43.4 & - & $> 24h$ \\ 
		FuseMTL & 2.898 \scriptsize{(0.01)} & 4.3 & \textbf{0.764 \scriptsize{(0.01)}} & 8483.3 \\ 
        SRMTL & 2.854 \scriptsize{(0.02)} & 10.3 & - & $> 24h$ \\        
		BiFactor & 2.882 \scriptsize{(0.01)} & 55.7 & - & $> 24h$ \\ 
		TriFactor & 2.857 \scriptsize{(0.04)} & 499.1 & - & $> 24h$ \\ \hline
		CCMTL & \textbf{2.793 \scriptsize{(0.01)}} & \textbf{1.8} & \textbf{0.767 \scriptsize{(0.01)}} & \textbf{35.3} \\ \hline
	\end{tabular}
	\label{tb:retail}
\end{table}

Table \ref{tb:retail} depicts the results on two retail datasets: \texttt{Sales} and \texttt{Ta-Feng}; it also depicts the time required (in seconds) for the training using the best found parametrization for each method. Here $50\%$ of samples are used for training. The best runtime for MTL methods is shown in boldface. Tasks in these two datasets are, again, rather homogeneous, therefore, the baseline STL has a competitive performance and outperforms many MTL methods. STL outperforms ITL, L21, Trace\footnote{The hyperparameters searching range for L21 and Trace are shifted to $[10^0, 10^{10}]$ for \texttt{Ta-Feng} dataset to get reasonable results.}, RMTL, CMTL, FuseMTL, SRMTL, BiFactor, and TriFactor on the \texttt{Sales} dataset, and outperforms ITL, L21, Trace and RMTL on the \texttt{Ta-Feng} data\footnote{We set a timeout at $24h$. CMTL, SRMTL, BiFactor, and TriFactor did not return the result on this timeout for the \texttt{Ta-Feng} dataset.}.  
%\footnote{We had to stop reporting the error produced by CMTL, BiFactor and TriFactor, on the \texttt{Ta-Feng} dataset, since the runtime any grid selected set of parametrization exceeded $24$ hours.}
CCMTL is the only method that performs better (also statistically better) than STL on both data sets; it also outperforms all MTL methods (with statistical significance) on both data sets, except for FuseMTL which performs slightly better than CCMTL only on the \texttt{Ta-Feng} data. CCMTL requires the smallest runtime in comparison with the competitor MTL algorithms. On \texttt{Ta-Feng} dataset, CCMTL requires only around $30$ seconds, while the SoA methods need up to hours or even days.

\begin{table}	
	\centering
	\caption{Results (RMSE and runtime) on Transportation datasets. The table reports the mean and standard errors over $5$ random runs. The best model and the statistical competitive models (by paired \emph{t-test} with $\alpha=0.05$) are shown in bold. The best runtime for MTL methods is shown in boldface.}
	\begin{tabular}{|c|cc|cc|}
		\hline
		\multirow{2}{*}{} &
		\multicolumn{2}{c|}{Alighting} &
		\multicolumn{2}{c|}{Boarding}  \\
		& RMSE & Time(s) & RMSE & Time(s) \\
		\hline
		STL & 3.073 \scriptsize{(0.02)} & 0.1 & 3.236 \scriptsize{(0.03)} & 0.1\\ 
		ITL & 2.894 \scriptsize{(0.02)} & 0.1 & 3.002 \scriptsize{(0.03)} & 0.1 \\ \hline
		L21 & 2.865 \scriptsize{(0.04)} & 14.6 & 2.983 \scriptsize{(0.03)} & 16.7 \\ 
		Trace & 2.835 \scriptsize{(0.01)} & 19.1 & 2.997 \scriptsize{(0.05)} & 17.5 \\ 
		RMTL & 2.985 \scriptsize{(0.03)} & 6.7 & 3.156 \scriptsize{(0.04)} & 7.1\\ \hline
		CMTL & 2.970 \scriptsize{(0.02)} & 82.6 & 3.105 \scriptsize{(0.03)} & 91.8\\ 
		FuseMTL & 3.080 \scriptsize{(0.02)} & 11.1 & 3.243 \scriptsize{(0.03)} & 11.3\\ 
        SRMTL & \textbf{2.793 \scriptsize{(0.02)}} & 12.3 & \textbf{2.926 \scriptsize{(0.02)}} & 14.2 \\        
		BiFactor & 3.010 \scriptsize{(0.02)} & 152.1 & 3.133 \scriptsize{(0.03)} & 99.7\\
		TriFactor & 2.913 \scriptsize{(0.02)} & 282.3 & 3.014 \scriptsize{(0.03)} & 359.1 \\ \hline
		CCMTL & \textbf{2.795 \scriptsize{(0.02)}} & \textbf{4.8} & \textbf{2.928 \scriptsize{(0.03)}} & \textbf{4.1} \\ \hline
	\end{tabular}
	\label{tb:transport}
\end{table}

Table \ref{tb:transport} depicts the results on the \texttt{Transportation} datasets, using two different target attributes (alighting and boarding); again, the runtime is presented for the best-found parametrization, and the best runtime achieved by the MTL methods are shown in boldface. The results on this dataset are interesting, especially because both baselines are not competitive as in the previous datasets. This could, safely, lead to the conclusion that the tasks belong to latent groups, where tasks are homogeneous intra-group, and heterogeneous inter-groups. All MTL methods (except the FuseMTL) outperform at least one of the baselines (STL and ITL) on both datasets. 
%The used baselines define a spectrum, on the one side is the learning of each task in isolation (ITL), and on the other side is the learning of all tasks together (STL). MTL methods, aim at finding the right balance between grouping tasks and isolating groups. 
Our approach, CCMTL, seems to reach the right balance between task independence (ITL) and complete correlation (STL), as confirmed by the results; it achieves, statistically, the lowest RMSE against the baselines and the all other MTL methods (except SRMTL), and it is at least 40\% faster than the fastest MTL method (RMTL).

\begin{figure*}
	\centering
	\includegraphics[width=0.85\textwidth]{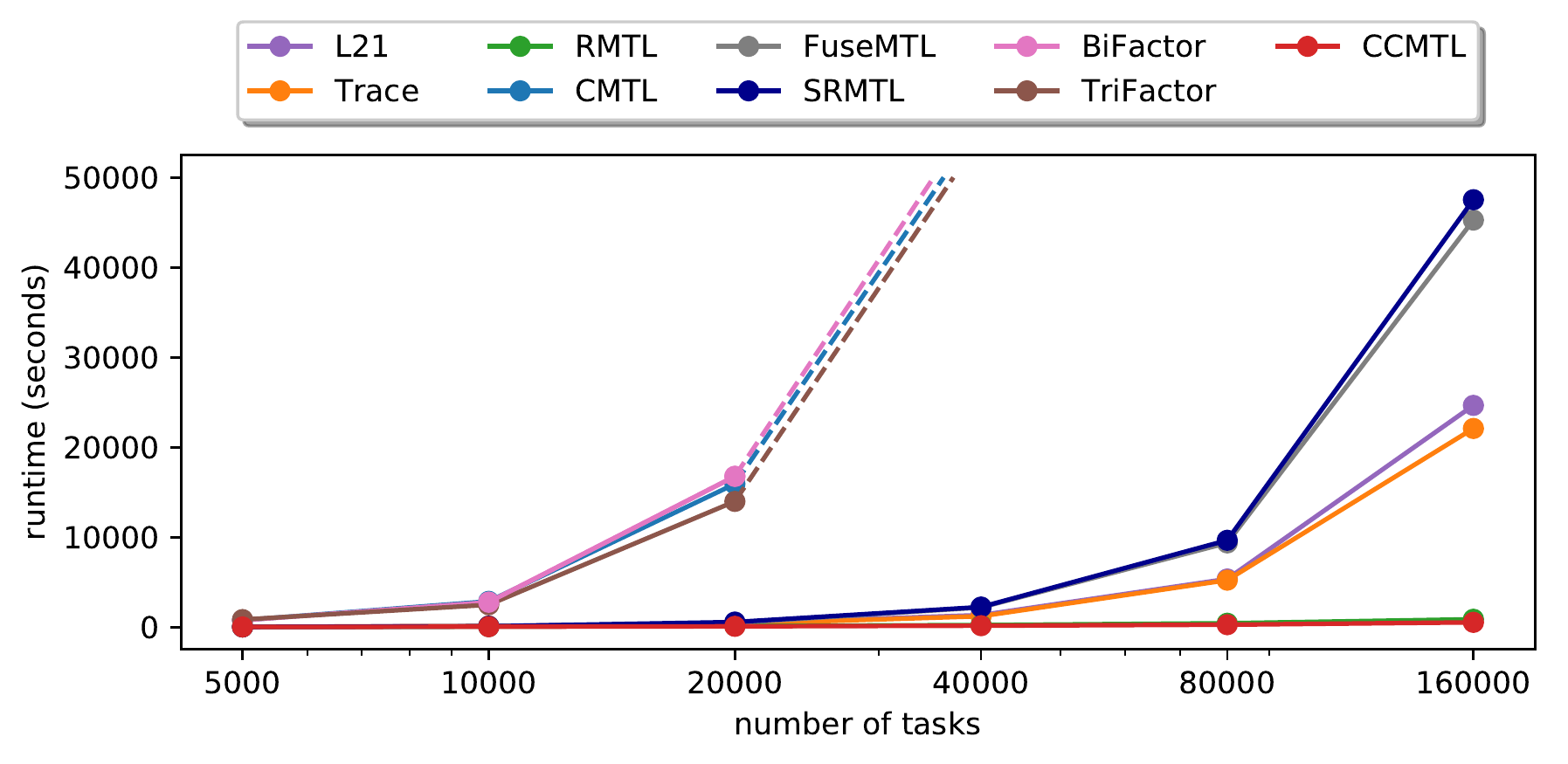}
	\caption{Scalability experiments on synthetic datasets with increasing number of tasks (log scale on task number)}
	\label{fig:scale}
\end{figure*}

\subsubsection{Scalability.}
%In the scalability analysis, we generated synthetic datasets with fixed feature size (i.e. $P$=10) and exponentially growing number of tasks ranging from $5k$ to $160k$. Tasks are generated by groups, where each group owns $100$ tasks. The latent feature for tasks in the same group are sampled from $\mathcal{N}(C,1)$, where the center $C$ is sampled from $\mathcal{N}(0,1000)$.
In the scalability analysis, we use the \texttt{ScaleSyn} dataset. We search the hyperparameters for all the methods on the smallest one with $5k$ tasks and evaluate the runtime of the best-found hyperparameters on all the others.
% In the scalability analysis, the runtime needed to train CCMTL and each of the other methods is recorded, when learning on datasets with  number of task (starting by 5k tasks and ending by 160k tasks). 
Figure \ref{fig:scale} shows the recorded runtime in seconds while presenting the number of tasks in the log-scale. As can be seen, CMTL, MTFactor, and TriFactor were not capable to process 40k tasks in less than 24 hours, therefore, they were stopped (the extrapolation for the minimum needed runtime can be seen as a dashed line). FuseMTL, SRMTL, L21, and Trace tend to show a super-linear growth of the needed runtime in the log-scale. Both CCMTL and RMTL show constant behavior in the number of tasks, where only around $500$ and $800$ seconds are needed for the dataset with $160k$ tasks respectively. CCMTL is the fastest method among all the MTL competitors. A regression analysis on the runtime curves is presented in the supplementary material; this analysis shows that only CCMTL and RMTL scales linearly, compared to the other methods that scale quadratically.

\section{Related Work}
Zhang and Yang \cite{Zhang2017Survey}, in their survey, classify multi-task learning into different categories: feature learning, low-rank approaches, and task clustering approaches, among others. These categories are characterized mainly by how the information between the different tasks is shared and which information is subject to sharing.

One type of MTL perform joint feature learning (L21) that assumes all tasks share a common set of features and penalizes it by $\ell_{2,1}$-norm regularization \cite{argyriou2007multi,argyriou2008convex,liu2009multi}. Another way to capture the task relationship is to constrain the models from different tasks to share a low-dimensional subspace, i.e. $W$ is of low-rank (Trace) \cite{ji2009accelerated}. Both L21 and Trace assumes all the tasks are relevant, which is usually not true in real-world applications. Chen et al. \cite{chen2011integrating} propose robust multi-task learning (RMTL) in identifying irrelevant tasks by integrating the low-rank and group-sparse structures. These methods are relatively fast but cannot 
capture the task relationship when they belong to different latent groups.

The task clustering approaches aim at solving this issue where different tasks form clusters of similar tasks. 
% Thrun and Sullivan \cite{thrun1996discovering} propose the first task clustering algorithm that use a weighted nearest neighbor classifier for each tasks. This is a two-step method, where task clustering and model learning is performed sequentially. Later studies focused on integrated methods. For instance, 
Jacob et al. \cite{jacob2009clustered} propose to integrate the objective of $k$-means into the learning framework and solve a relaxed convex problem. Zhou et al. \cite{zhou2011clustered} use a similar idea for task clustering, but with a different optimization method. Zhou and Zhao \cite{zhou2016flexible} propose to cluster tasks by identifying representative tasks. Another way of performing task clustering is through the decomposition of the weight matrix $W$ \cite{kumar2012learning,barzilai2015convex}. Later, a similar idea is performed with co-clustering of the features and the tasks \cite{murugesan2017co}. Despite being effective, these methods are expensive to train and the number of clusters is needed as a hyperparameter which makes the model tuning even more difficult.

Fused Multi-task Learning (FuseMTL) \cite{zhou2012modeling,chen2010graph} and Graph regularized Multi-task Learning (SRMTL) \cite{zhou2011malsar} are the most related works, where $\ell_1$ norm and squared $\ell_2$ is used as the regularizer. As shown in the experiments, with $\ell_2$ norm CCMTL outperforms FuseMTL and SRMTL in most cases. In addition, the underlying optimization method is also different, where the proposed CCMTL runs much faster. Another closely related work is a multi-level clustering method \cite{han2015learning}, where objective function also uses $l_2$ norm. It looks similar to the proposed one if only one layer is considered \cite{Zhang2017Survey}. However, the method scales quadratically since constraint is on all the pairs of tasks, making it unsuitable for the studied problem in this paper with a massive number of tasks.

\section{Conclusion}
In this paper, we study the multi-task learning problem with a massive number of tasks. We integrate \emph{convex clustering} into the multi-task regression learning problem that captures tasks' relationships on the $k$-NN graph of the prediction models. Further, we present an approach CCMTL that solves this problem efficiently and is guaranteed to converge to the global optimum. Extensive experiments show that CCMTL makes a more accurate prediction, runs faster than SoA competitors on both synthetic and real-world datasets, and scales linearly in the number of tasks. CCMTL will serve as a method for a wide range of large-scale regression applications where the number of tasks is tremendous. In the future, we will explore the use of the proposed method for online learning and high-dimensional problem.

\section{ Acknowledgments}
The authors would like to thank Luca Franceschi for the discussion of the paper and the anonymous reviewers for their helpful comments and Prof. Yiannis Koutis for sharing the implementation of CMG.

\bibliography{bib}
\bibliographystyle{aaai}

\end{document}